%% file: main.tex
\documentclass{article}
\usepackage{preprint,times}
\input{math_commands.tex}

\usepackage{url}
\usepackage{graphicx}
\usepackage{tikz}
\usetikzlibrary{arrows.meta}
\definecolor{ovharn}{HTML}{0072B2}\definecolor{ovwts}{HTML}{E69F00}\definecolor{ovstep}{HTML}{009E73}
\AtBeginDocument{\setlength\abovedisplayskip{5pt plus 1pt minus 2pt}\setlength\belowdisplayskip{5pt plus 1pt minus 2pt}\setlength\abovedisplayshortskip{1pt}\setlength\belowdisplayshortskip{3pt}}
\usepackage{booktabs}
\usepackage{amsmath}
\usepackage{amsthm}
\newtheorem{proposition}{Proposition}
\usepackage{amssymb}
\usepackage{float}
\usepackage{xcolor}
\usepackage{multirow}
\usepackage{wrapfig}

\title{Harness Evolution Hits a Ceiling: When Weight Training Should Begin}

\preprintfinalcopy
\author{%
  Yuan Tian\thanks{Equal contribution.}\hspace{0.55em}\thanks{Corresponding authors: \texttt{yuan.tian.research@gmail.com}, \texttt{binghuresearch@gmail.com}.} \\
  Independent Researcher \\
  \And
  Bing Hu\footnotemark[1]\hspace{0.55em}\footnotemark[2] \\
  Independent Researcher \\
  \And
  Hao Wang \\
  University of California, Berkeley \\
  \AND
  Binghang Lu \\
  Purdue University \\
  \And
  Fang Wu \\
  Stanford University \\
}

\begin{document}
\maketitle
\lhead{Preprint. Under review.}
\suppressfloats[t]

\begin{abstract}
Improving a long-horizon LLM agent means evolving the harness around a frozen model or training its weights. We follow one line through both: let a self-evolving harness make the system stronger first, then cross seed and evolved harnesses with base and trained weights to learn which gains the trained model keeps and which still need the runtime. We show that the right lever can be read off the agent's failure composition. Labelling failed trajectories by the first signal that fires separates process failures (blocked calls, loops, exhausted step budgets) from content failures (a delivered plan that is poor). Harness evolution repairs the former, the behaviour it instils can be trained into the weights, and content failures are what weight training is for. On DeepPlanning, a verifiable planning benchmark with a held-out split, a self-evolving harness loop, in which an LLM proposes edits to its own harness and a judge verifies them, lifts the held-out score of Qwen3.5-4B from 0.16 to 0.30 and of Qwen3.5-9B from 0.32 to 0.44, and the gain lands where the claim places it: for 4B, held-out delivery rises from 55\% to 90\% and loops fall, while content failures are left for the weights. Ablations pin the gain on capability-granting components rather than on enforcement, and the loop's own evidence package shows why it stops where it does. LoRA adapters trained on evolved-harness trajectories then internalise the gain: under the original harness they add +0.13 on held-out tasks for both sizes; on 4B they stack with the harness to more than double the held-out score, and on 9B the adapter alone matches the full evolution line, cutting content failures from a quarter of trajectories to one in twenty. A placebo adapter trained on answer-shuffled trajectories falls below the base model, so the gain is in the trajectory content; full fine-tuning on a larger trajectory set adds a further +0.04. The same loop transfers to WebArena-Lite (+0.09 on 117 unseen tasks), where its gain lives in what the model sees, two pages of history instead of one, and adapters trained on those trajectories do not add to it. The result is a diagnose-then-intervene rule applied twice: read the failure composition, and evolve the harness for process failures and train the weights for content failures; then read what the accepted edits changed, and train in the gains that changed what the model writes while keeping the harness for the gains that changed what it sees. Scores are four-rollout means against fresh anchors and comparisons are drawn within one night, except where marked, across eight models from six families and two benchmarks.
\end{abstract}

\section{Introduction}
\label{sec:intro}

A long-horizon LLM agent can be improved in two places. One can change the \emph{harness}, the code and text around a frozen model: system prompts, memory tools, checkpoints, output gates, the tools it may call. Or one can change the \emph{weights}, by fine-tuning on trajectories. Each lever has its own literature, self-evolving harnesses \citep{ahe,adas,rrsi,harnesscompass} and trajectory fine-tuning \citep{agenttuning,fireact,webrl}, and concurrent work has begun to alternate the two \citep{whale,coharness,harness_zero}. A self-evolving harness can raise the system's score without touching the weights, but that score does not say whether the model still needs the runtime to reach it. Our line is therefore: let self-evolution make the harnessed system stronger first, then cross seed and evolved harnesses with base and trained weights to learn which gains the trained model keeps and which still need the runtime. Both levers are expensive to evaluate and the per-rollout score is noisy, so the rule for \emph{which} lever to pull must be grounded in measurements that survive the noise.

This paper builds that rule empirically on DeepPlanning \citep{deepplanning}, a travel-planning benchmark with deterministic programmatic scoring, using 48 development tasks for every decision and 72 held-out tasks no loop ever saw. We first fix the measurement protocol: with a per-rollout standard deviation of 0.023, every configuration is a four-rollout mean, differences below 0.035 are noise, and a fresh anchor follows every accepted change. We then label every trajectory by the first failure signal that fires (Section~\ref{sec:taxonomy}): unbacked assertions in the plan (F1), the harness's own interventions getting in the way (F2), loops and exhausted step budgets (F3), or a delivered plan that is simply poor (F4). The hypothesis the paper tests is that the harness lever removes the process classes (F2, F3) and leaves the content class (F4) in place, that the behaviour the harness instils can be trained into the weights, and that the content class is the target for weight training (Figure~\ref{fig:overview}).

\begin{figure}[t]
\centering
\resizebox{0.90\linewidth}{!}{%
\begin{tikzpicture}[font=\scriptsize, >={Latex[length=1.6mm]}, thick,
  box/.style={draw=ovstep!80!black, rounded corners=2pt, align=center, inner sep=4pt, text width=#1, fill=ovstep!12},
  lab/.style={font=\tiny, inner sep=1pt, fill=white}]
\node[box=2.1cm] (bench) at (0,0) {\textbf{Benchmark}\\tasks with a programmatic scorer};
\node[box=2.7cm] (diag) at (3.4,0) {\textbf{Diagnose}\\label every trajectory F1--F4;\\delivery $\times$ quality};
\node[box=4.3cm, draw=ovharn, fill=ovharn!10] (harn) at (8.2,0.85) {\textbf{\textcolor{ovharn!85!black}{Self-evolving harness}}\\an LLM proposes edits to its own harness; a judge verifies them};
\node[box=4.3cm, draw=ovwts, fill=ovwts!14] (wts) at (8.2,-0.85) {\textbf{\textcolor{ovwts!70!black}{Train the weights}}\\LoRA or full fine-tuning on trajectories from the evolved harness};
\node[box=2.6cm] (eval) at (13.0,0) {\textbf{Evaluate}\\seed vs.\ evolved harness $\times$ base vs.\ adapter, held-out};
\draw[->] (bench) -- (diag);
\draw[->] (diag.east) -- (harn.west) node[lab, midway, above, sloped, text=ovharn!85!black] {F2, F3};
\draw[->] (diag.east) -- (wts.west) node[lab, midway, below, sloped, text=ovwts!70!black] {F4};
\draw[->] (harn.south) -- (wts.north) node[lab, midway, right] {trajectories};
\draw[->] (harn.east) -- (eval.west);
\draw[->] (wts.east) -- (eval.west);
\end{tikzpicture}}
\caption{\textbf{Overview.} Trajectories under the seed harness are labelled by their first failure signal (Section~\ref{sec:taxonomy}) and the composition decides the lever: a self-evolving harness loop removes the process failures, the trajectories it produces train the weights for the content failures, and both levers are measured under the seed and the evolved harness on held-out tasks.}
\label{fig:overview}
\end{figure}

\textbf{Harness self-evolution} (a loop in the style of AHE~\citep{ahe}, Section~\ref{sec:setup}; Figure~\ref{fig:evolution}) lifts Qwen3.5-4B by +0.143 / +0.133 (dev / held-out) over the seed harness and Qwen3.5-9B by +0.056 / +0.119, and the gains land where the taxonomy places them (Figure~\ref{fig:frate_breakdown}): for 4B, delivery rises from 59\% to 88\% and loop failures (F3) fall from 64\% to 37\% of trajectories; for 9B, delivery was already 97\% and loops fall from 39\% to 17\%, so the gain shows up as better delivered plans. What the loop leaves behind is the content class: F4 rises from 10\% to 28\% of 4B trajectories and stays at 25\% for 9B.

\textbf{Why the loop stops at F4.} Component ablations (Section~\ref{sec:analysis}) show that the compliance-enforcing components (exit gate, checkpoint, hard blocks) are mostly within noise, whereas the capability-granting components (a state ledger, a distilled lessons file, invented tools) carry the gain. Yet the evidence package the proposal model reads leads with gate and checkpoint metrics and never aggregates the scorer's failure composition: the loop optimises what it can see, and what it can see does not include the cause of F4.

\textbf{Weight training} (LoRA on trajectories collected under the evolved harness, Section~\ref{sec:results}) transfers to held-out tasks: on the seed harness the adapter alone adds +0.13 for both sizes, matching or exceeding what harness evolution delivered; stacked on the evolved harness it adds a further +0.04 / +0.08 (dev / held-out) for 4B, while for 9B the adapter alone reaches the evolved-harness level, so at that size the two levers substitute for each other. Each adapter moves its own failure class (the 4B adapter raises delivery and cuts loops, the 9B adapter cuts F4 from 23--28\% of trajectories to 5--7\%); a placebo adapter trained on answer-shuffled trajectories falls 0.16 / 0.23 below the base model, and full fine-tuning adds about +0.04 on held-out over LoRA.

\textbf{Contributions.} (1) A measurement protocol for harness and weight interventions on a noisy long-horizon benchmark, with a bound on selection optimism. (2) Harness self-evolution on two model sizes with held-out validation, decomposed into delivery, loop reduction and plan quality; F4 is the class the loop left in place in both Qwen lines. (3) Component ablations on both sizes: capability-granting components carry the gain, enforcement mostly does not. (4) An account of why the loop stops at F4, from what its evidence package shows the proposer. (5) LoRA and full-parameter training on evolved-harness trajectories, with a placebo control and a rank ablation: the gain transfers to held-out tasks, stacks with the harness on 4B and substitutes for it on 9B, and each adapter moves the failure class the rule assigns to it; on WebArena-Lite, where the gain lives in what the model sees, it did not train in. (6) Generality: across nine evolution lines on eight models from six families, the seed measurements separate the five lines where the harness lever paid (+0.05 to +0.14) from the three where it paid nothing; the loop transfers to a second benchmark.

\section{Background and Related Work}
\label{sec:background}

\subsection{Failure Analysis of LLM Agents}
Taxonomies of agent failure have grown fine-grained: MAST \citep{mast} annotates fourteen failure modes of multi-agent systems, TRAIL \citep{trail} and AgentErrorTaxonomy \citep{agentdebug} label reasoning, execution and planning errors inside traces, and \citet{hidden_failures} catalogue 78 failure types. These are organised by the \emph{kind} of error and need an LLM or a human annotator; ours is organised by the \emph{lever that removes} the error and is computed by a deterministic checker from process signals and the scorer. Ecdysis \citep{ecdysis} comes closest, using recurrence across tasks to decide which failures the harness should absorb; we test the split by applying each lever and reading which class it moves.

\subsection{Self-Evolving Harnesses and Automated Agent Design}
A fast-growing line treats the harness around a frozen model as the optimisation target: ADAS \citep{adas} and AFlow \citep{aflow} search agent programs and workflow graphs, and AHE \citep{ahe} introduced the propose-verify loop we build on. The loop has since been regularised against overfitting \citep{rrsi,harnesscompass}, and challenged to beat matched-budget test-time search \citep{rethinking_harness_eval}; memory architectures \citep{memgpt,auto_mem} and runtime governance \citep{state_aware_runtime} are particular harness components, and \citet{sham_control} price planning guidance against a shuffled-text control. We add which class of failure the loop removed, which it left, and what its proposer could see.

\subsection{Training Agents on Trajectories, and Moving Harness Gains into the Weights}
Fine-tuning agents on trajectories has a settled recipe: AgentTuning \citep{agenttuning} and FireAct \citep{fireact} distil GPT-4 trajectories into small models, WebRL \citep{webrl} trains web agents on WebArena-Lite with an online curriculum, and STaR-style self-improvement \citep{zelikman2022star} trains on the model's own filtered outputs, the recipe we use; \citet{distill_system2} moved a prompting scaffold's gain into the weights for single-turn tasks. Concurrent work brings the question to agent harnesses: Harness-Zero \citep{harness_zero} distils an optimised harness into the weights and removes it at deployment, Co-Harness \citep{coharness} and WHALE \citep{whale} alternate harness search and weight updates, \citet{harness_posttraining} show that harness-aware post-training is more robust to tool shift, and \citet{coevolve_harness} find that imitating an expert under an evolved harness can regress. These works switch lever by schedule or by training signal; we switch on the failure composition, obtain the removable-harness result with plain fine-tuning on the model's own trajectories, show that it is size-dependent, and add a placebo adapter trained on answer-shuffled trajectories. LoRA \citep{hu2022lora} and QLoRA \citep{dettmers2023qlora} supply the machinery; long-horizon RL credit assignment \citep{schulman2017ppo,shao2024grpo,himp,hcapo} is the route we do not take.

\subsection{Evaluating Agents Under Noise}
DeepPlanning \citep{deepplanning} scores plans programmatically and admits a clean development / held-out split; WebArena \citep{webarena}, AgentBench \citep{agentbench} and AgentBoard \citep{agentboard} are broader but rely on LLM judges or do not expose failure composition. \citet{agents_that_matter} document inadequate hold-outs, \citet{error_bars} ask for standard errors and paired comparisons, \citet{winners_curse} formalise the winner's curse of adaptive search, and \citet{rethinking_harness_eval} show that harness evolution overfits when search and report share a benchmark.

\textbf{What this paper adds.} Relative to these lines, this paper (i) labels failures by the lever that removes them and shows, by applying both levers, where the split holds; (ii) measures both levers on the same models and tasks under one calibrated noise model; (iii) traces the harness ceiling to what the loop's evidence package shows the proposer; and (iv) turns the results into a rule that reads the lever off the failure composition.

\section{A Diagnostic Framework}
\label{sec:framework}
\subsection{The failure taxonomy}
\label{sec:taxonomy}
Each trajectory is labelled post hoc by an automated checker: a trajectory whose delivered plan scores at least 0.5 is \emph{healthy}, and every other trajectory receives the first of the following signals that fires:
\begin{itemize}\setlength\itemsep{0.05em}
  \item \textbf{F1, evidence}: at least half of the plan's factual claims are unbacked by anything retrieved (unbacked-citation ratio $\geq 0.5$), or the agent passed the exit gate with unresolved items.
  \item \textbf{F2, harness friction}: a harness intervention itself blocked the agent (a forced checkpoint unlock, or ten or more blocked tool calls) and the score is low.
  \item \textbf{F3, execution control}: the same tool call repeated three or more times, or the trajectory used 90\% or more of its step budget, or the harness terminated it.
  \item \textbf{F4, planning capacity}: none of the signals above fired; in practice a plan was delivered and scores below 0.5 (rarely, an undelivered trajectory with no process signal).
\end{itemize}
Separately, we record the \emph{delivery rate}, the fraction of trajectories that returned a parsable plan at all, and the \emph{quality among delivered plans}, the mean score over those trajectories; the composite score is their product. This is the hypothesis the paper tests: F2 and F3 are the classes a harness can act on directly, and F4 is the class it did not reach in either Qwen line, since by construction nothing in the trajectory's process signals distinguishes an F4 trajectory from a healthy one; a harness gain that consists of removed F2 and F3 trajectories is a behaviour, which the weights can learn. Table~\ref{tab:f-rates} (Appendix~\ref{app:frates}) gives the composition before and after evolution for both Qwen sizes.

\subsection{Two identities behind the numbers}
\label{sec:identities}
\textbf{Selection optimism.} A round keeps the best of several candidate readings. A reading is the mean of $m$ rollouts of the 48-task score, with single-rollout standard deviation $\sigma = 0.023$ (Appendix~\ref{app:noise}), so it carries noise of scale $\sigma/\sqrt{m}$.
\begin{proposition}[selection optimism]
\label{prop:optimism}
Let $X_k = \mu_k + \varepsilon_k$, $k = 1, \dots, K$, be the readings of $K$ candidates with true means $\mu_k$ and zero-mean noises $\varepsilon_k$, each sub-Gaussian with parameter $s$. If the candidate with the largest reading is kept, $k^\star = \arg\max_k X_k$, then
\begin{equation}
\mathbb{E}\big[X_{k^\star} - \mu_{k^\star}\big] \;\le\; s\sqrt{2\ln K}.
\label{eq:optimism}
\end{equation}
\end{proposition}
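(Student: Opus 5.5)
The proof rests on one observation: the optimism of the kept reading is bounded by the largest noise term. Since $k^\star$ is a random index, $X_{k^\star}-\mu_{k^\star}=\varepsilon_{k^\star}$, and this holds pointwise for every realisation. Pointwise we also have $\varepsilon_{k^\star}\le \max_{k}\varepsilon_k$, because $\varepsilon_{k^\star}$ is one of the $K$ terms over which the maximum is taken. So it suffices to show $\mathbb{E}[\max_k \varepsilon_k]\le s\sqrt{2\ln K}$. This bound holds whatever the true means $\mu_k$ are, and whatever the dependence between the noises. Notice that the selection rule, which depends on the $\mu_k$, drops out completely once we pass to the maximum.

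To bound the expected maximum of sub-Gaussian variables I use the standard exponential-moment (soft-max) argument. Take sub-Gaussian with parameter $s$ to mean $\mathbb{E}[e^{\lambda\varepsilon_k}]\le e^{\lambda^2 s^2/2}$ for all $\lambda\in\mathbb{R}$. Fix $\lambda>0$ and apply Jensen's inequality to the convex map $x\mapsto e^{\lambda x}$:
\[
e^{\lambda\,\mathbb{E}[\max_k\varepsilon_k]}\le \mathbb{E}\big[e^{\lambda\max_k\varepsilon_k}\big]=\mathbb{E}\big[\max_k e^{\lambda\varepsilon_k}\big]\le \sum_{k=1}^K \mathbb{E}\big[e^{\lambda\varepsilon_k}\big]\le K e^{\lambda^2 s^2/2}.
\]
The step from the maximum to the sum is a union-type bound on nonnegative terms, and it needs no independence. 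Taking logarithms gives $\mathbb{E}[\max_k\varepsilon_k]\le \frac{\ln K}{\lambda}+\frac{\lambda s^2}{2}$.

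Next I minimise over $\lambda$. The choice $\lambda=\sqrt{2\ln K}/s$ makes the two terms equal, each $s\sqrt{\ln K/2}$, and their sum is $s\sqrt{2\ln K}$, which is \eqref{eq:optimism}. The case $K=1$ is degenerate: there the right-hand side is $0$, and $\mathbb{E}[\varepsilon_1]=0$ holds directly, so I handle it separately instead of letting $\lambda\to\infty$.

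The main point needing care is the first step. It is tempting to write $\mathbb{E}[\varepsilon_{k^\star}]$ and then try to argue about each index, but $k^\star$ is correlated with the noises. The clean way around this is the pointwise domination by $\max_k\varepsilon_k$, and that step is what makes the final bound free of the $\mu_k$. As an application, each reading is the mean of $m$ rollouts with per-rollout standard deviation $\sigma$. Under a Gaussian or bounded-noise model the parameter is $s=\sigma/\sqrt{m}$, so the optimism is at most $\sigma\sqrt{2\ln K/m}$.
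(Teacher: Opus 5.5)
Your proof is correct and follows the paper's: the same pointwise domination $\varepsilon_{k^\star}\le\max_k\varepsilon_k$, then the expected-maximum bound for $K$ sub-Gaussian variables, which you derive with the Jensen/soft-max argument (optimal $\lambda=\sqrt{2\ln K}/s$, with $K=1$ handled separately) where the paper simply cites the standard result. One small caveat outside the proof itself: in your closing aside, bounded noise does not in general give $s=\sigma/\sqrt{m}$, because Hoeffding's lemma gives a range-based parameter that can exceed the standard deviation, so this identification (which the paper also makes, taking $s=\sigma/2$ at four rollouts) is exact only under a Gaussian-type noise model.
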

\begin{proof}
$X_{k^\star} - \mu_{k^\star} = \varepsilon_{k^\star} \le \max_k \varepsilon_k$, and the expected maximum of $K$ zero-mean $s$-sub-Gaussian variables is at most $s\sqrt{2\ln K}$ \citep{boucheron2013concentration}.
\end{proof}
With 10 to about 30 candidates per line and one rollout per reading ($s = \sigma$) the bound is 0.05--0.06, above the 0.035 line, which is why the three single-rollout lines of Section~\ref{sec:other-models} were re-measured under the full protocol before being reported; with four rollouts ($s = \sigma/2$) it is 0.025--0.030, inside the line, and the fresh anchor after every accept removes the optimism from the next round's anchor (Figure~\ref{fig:winners-curse}, Appendix~\ref{app:pitfalls}, shows the gap in our two lines). The bound concerns measurement noise only; whether a gain holds on other tasks is what the 72 held-out tasks answer.

\textbf{Score identity.} Let $r$ be the delivery rate and $\bar q$ the mean score among delivered plans. On DeepPlanning an undelivered plan scores zero, so the composite score is $S = r\,\bar q$ exactly, and a change from $(r, \bar q)$ to $(r', \bar q')$ decomposes as
\begin{equation}
\Delta S \;=\; \bar q\,\Delta r \;+\; r'\,\Delta \bar q .
\label{eq:identity}
\end{equation}
A lever that only moves $r$ is capped at $\bar q$, the quality of what the model already writes; a lever that raises $\bar q$ is worth $r'\,\Delta\bar q$, more when delivery is already high. The two factors are complementary, and Table~\ref{tab:decomp} (Appendix~\ref{app:frates}) reports both for the Qwen lines and MiniCPM.

\section{Experimental Setup}
\label{sec:setup}

\subsection{Benchmark, models and measurement}
DeepPlanning \citep{deepplanning} poses multi-day travel-planning tasks that require tool calls (flights, hotels, attractions, distances), state tracking across many steps, and a final structured plan that is scored programmatically against hard constraints. We use 120 tasks split once, before any experiment, into 48 development tasks and 72 held-out tasks. All evolution, training and ablation decisions use only the 48 development tasks; the 72 held-out tasks are evaluated only for reporting. The official benchmark loop is replaced by our own harness that exposes state to the agent and logs every step; the official tools and scorer are unchanged. Our primary models are Qwen3.5-4B and Qwen3.5-9B, served locally with thinking disabled at temperature 0.7. Three further families (MiniCPM5-2B, Spark-X2.5-4B, Ministral-3-8B) and Gemma~4~12B test the other branch of the rule (Appendix~\ref{app:boundary}).

Every configuration is scored as the mean of four rollouts. Across 26 same-night cells the per-rollout standard deviation has median 0.023 (worst 0.042; Appendix~\ref{app:noise}), so we treat a difference between two 4-rollout means as resolvable only above 0.035 and write ``within noise'' below it; comparisons are made between cells run on the same machine on the same night, since cross-night drift of one configuration reaches 0.05; the exceptions are marked where they occur and listed under Limitations.

\subsection{Harness self-evolution protocol}
\label{sec:protocol}
We run a self-evolving loop in the style of AHE \citep{ahe}: an LLM reads the evidence from the current harness's own runs and rewrites the harness. One round: (1) run the current harness on the 48 development tasks (4 rollouts); (2) an evidence package is built from the run (score, healthy rate, F1--F4 counts, process metrics, worst cases) and handed to a proposal model (Kimi K3), which returns two candidate edits, each with a manifest and its own rollback condition; (3) each candidate is evaluated with 4 rollouts and judged against the round's anchor by fixed house rules (reject if the mean drops by more than 0.02, if the healthy rate drops by 0.05, or if the no-submission or repeat-call rate rises by 0.05) plus the proposer's own rollback condition; (4) the better accepted candidate is retained, and a \emph{fresh} 4-rollout anchor of the new harness is measured before the next round. The editable surface is the harness definition only: prompt files, policy parameters, middleware and tools. Each line starts from the same seed harness (a state ledger, an exit gate, a mid-trajectory checkpoint and four hard blocks). The 4B line ran 13 rounds (26 candidates) and the 9B line 5 rounds (10 candidates); full logs are in Appendices~\ref{app:log4b} and~\ref{app:log9b}.

\subsection{Weight training}
\label{sec:training}
Training data are trajectories collected under the \emph{evolved} harness on the 48 development tasks: 8 rollouts per task for 4B and 12 for 9B, keeping the three best trajectories per task above a score threshold (0.5 for 4B, 0.6 for 9B), which gives 82 trajectories over 33 tasks for 4B and 98 over 40 tasks for 9B. Each assistant step is one sample; the loss is on the assistant output only. LoRA uses rank 32, $\alpha = 64$, dropout 0.05 and learning rate $10^{-4}$, on the attention projections (qkvo) or attention plus MLP (qkvo+MLP), taken at the end of the first epoch; full-parameter fine-tuning uses $2\times10^{-6}$ with warm-up and cosine decay (Appendix~\ref{app:training}). The adapters in Table~\ref{tab:lora} are evaluated on both harnesses and both splits on the same night as their base model; those of the second textbook (Appendix~\ref{app:training}) on the seed harness only.

\section{Results}
\label{sec:results}

\subsection{Harness self-evolution works and transfers}
\label{sec:evolution}
\begin{figure}[t]
\begin{center}
\includegraphics[width=0.98\linewidth]{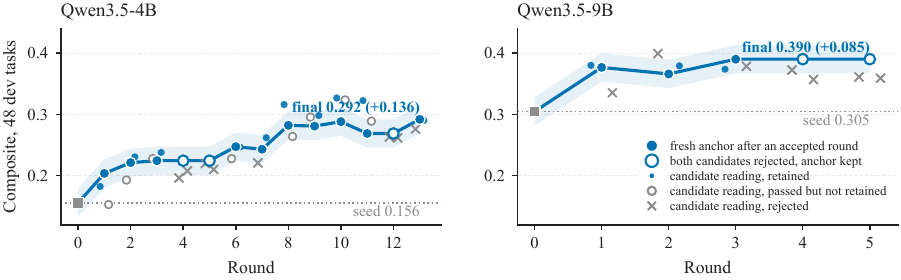}
\end{center}
\caption{\textbf{Per-round fresh anchors of the two evolution lines} (48 development tasks, 4-rollout means; band $= \pm 0.023$; filled: edit accepted, fresh anchor measured; hollow: both candidates rejected, anchor kept). Small markers are the two candidates of each round at their own readings (dot retained, hollow circle passed but not retained, cross rejected); the gap between a retained reading and the anchor that follows is the selection optimism of Proposition~\ref{prop:optimism}. The 4B dip at round 11 is a re-anchor after the context window was raised to 262k tokens.}
\label{fig:evolution}
\end{figure}
Figure~\ref{fig:evolution} shows the per-round anchors of both lines and the readings of both candidates in each round. The 4B line accepted an edit in 10 of 13 rounds (18 of 26 candidates passed the judge) and moved from 0.156 to 0.292 (+0.136); the 9B line accepted 3 of 10 candidates over 5 rounds and moved from 0.305 to 0.390 (+0.085). Table~\ref{tab:evolution-summary} (Appendix~\ref{app:frates}) re-measures seed and final harness outside the loop on both splits: the evolved harness transfers to held-out tasks (4B held-out gain +0.133 against dev +0.143; 9B +0.119 against +0.056).

Where does the gain come from? Table~\ref{tab:decomp} (Appendix~\ref{app:frates}) splits each score into delivery rate and quality among delivered plans, and Table~\ref{tab:f-rates} gives the failure composition. For 4B, delivery rises from 59\% to 88\% and loop failures (F3) fall from 64\% to 37\% of trajectories; Equation~\ref{eq:identity} assigns 0.077 of the +0.136 to delivery and 0.058 to the higher mean among delivered plans (0.266 to 0.332), and a paired comparison between the bare model and the full evolved harness on the tasks both deliver (Appendix~\ref{app:ablation}) moves by only +0.011 on dev and +0.022 on held-out, consistent with the higher mean reflecting which plans are now delivered rather than better plans on the same tasks. For 9B, delivery was already 97\% and the gain is a better plan among delivered ones (0.309 to 0.392), the $r'\,\Delta\bar q$ term, with loops falling from 39\% to 17\%. In neither line did the loop reduce F4: it grows from 10\% to 28\% of 4B trajectories as loops are converted and stays at 25\% for 9B, where it is the largest failure class at the end; for 4B the residual F3 (37\%) is still larger (Figure~\ref{fig:frate_breakdown}).
\begin{figure}[t]
\begin{center}
\includegraphics[width=0.92\linewidth]{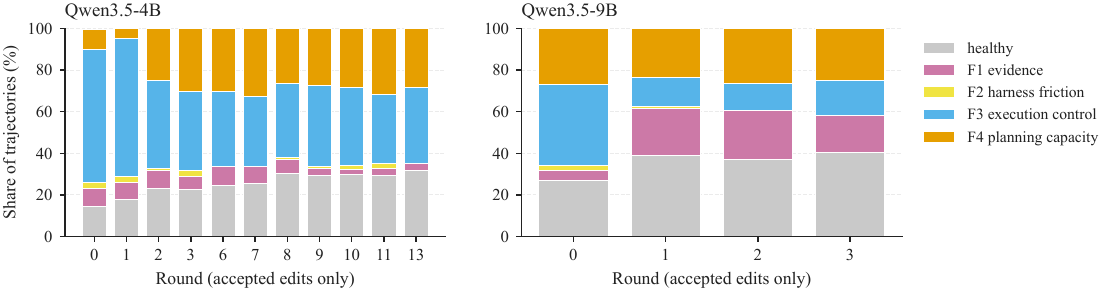}
\end{center}
\caption{\textbf{Failure composition of the fresh anchor after each accepted round}, from the same per-round history files as Figure~\ref{fig:evolution}. Loops (F3) fall and the healthy share rises in both lines; F4 is never reduced. In the 9B line the lessons file raises the share of plans with unbacked claims (F1).}
\label{fig:frate_breakdown}
\end{figure}

\subsection{Training internalises the harness gain}
\label{sec:lora}
Table~\ref{tab:lora} (plotted as Figure~\ref{fig:lora} in Appendix~\ref{app:training}) reports the same-night evaluation of the LoRA adapters trained on evolved-harness trajectories (Section~\ref{sec:training}). Four findings. \textbf{(i) The adapter transfers.} On the seed harness, the better adapter lifts held-out scores by +0.126 (4B, qkvo) and +0.131 (9B, qkvo+MLP), and development scores by +0.151 and +0.050; seven of the eight adapter-on-seed-harness cells are positive. \textbf{(ii) Stacking on 4B, substitution on 9B.} On the evolved harness, the 4B adapter still adds +0.041 (dev) and +0.082 (held-out); the 9B adapter adds +0.010 and +0.007, within noise, as is the other 9B adapter. For 9B either lever alone reaches 0.44--0.45 on held-out. \textbf{(iii) Content, not adapter.} The placebo (Table~\ref{tab:placebo}, Appendix~\ref{app:training}), trained with the identical recipe on the same trajectories with the assistant outputs permuted across samples, scores 0.158 below the base model on dev and 0.227 below on held-out, and its trajectories run 52--65 rounds instead of 19. Ranks 16, 32 and 64 all gain on held-out (Appendix~\ref{app:training}); full-parameter fine-tuning on a larger second textbook beats LoRA by +0.044 (4B) and +0.042 (9B) on held-out. \textbf{(iv) Each adapter moves its own failure class} (Table~\ref{tab:f-rates-lora}, Appendix~\ref{app:training}). The 4B adapter reproduces the harness's behavioural effect: delivery rises from 52\% to 78\% on dev and from 55\% to 78\% on held-out, loops fall by 18--19 points, and F4 is unchanged. The 9B adapter, on a model whose delivery is already saturated, moves the content class instead: F4 falls from 23\% to 7\% on dev and from 28\% to 5\% on held-out, and the freed trajectories become healthy (34\% to 46\%, 28\% to 53\%): training reaches the class the loop did not.

\textbf{Which placement wins follows what the textbook has to teach.} The attention-only adapter wins on 4B and the adapter that also trains the MLP blocks wins on 9B (Table~\ref{tab:lora}); we select the placement per model on the development split. The 4B textbook was collected by making a model that delivers half of its plans deliver, so what it mostly carries is that behaviour, and the attention-only adapter learned it better (delivery 78\% against 67\%, F3 40\% against 51\%; Table~\ref{tab:f-rates-lora}). The 9B textbook comes from a model that already delivers, so it carries plan content, and both adapters cut F4 (from 23\% and 28\% on dev and held-out), the one that also trains the MLP blocks, where language models hold most of their factual associations \citep{geva2021kv}, by more (to 7\% and 5\% against 13.5\% and 16\% for attention only). A richer textbook should then make the MLP placement win on 4B as well, which the second textbook shows (Table~\ref{tab:training}).

\input{tab_lora_results}

\subsection{The same loop on a second benchmark: WebArena-Lite}
\label{sec:webarena}
WebArena-Lite \citep{webarena,webrl} poses 165 web-navigation tasks on five self-hosted sites, with the official environments and scoring; natural-language answers are judged by an LLM. We split it once into 48 development and 117 test tasks and ran the same loop with Qwen3.5-9B under the protocol of Section~\ref{sec:protocol}, with a line measured on this benchmark ($\sigma = 0.045$, line 0.064). The seed harness reproduces the benchmark agent's interface and adds the ledger, checkpoint and exit gate of the DeepPlanning seed harness. Thirteen rounds accepted ten edits and moved the development anchor from 0.234 to 0.375; one interface edit in round 10, keeping the two most recent full pages in the context instead of one, accounts for +0.12 of it.

\textbf{The gain transfers, with the same shape.} Table~\ref{tab:webarena} (Appendix~\ref{app:boundary}) reports seed and closing harness on the 117 test tasks, measured on one night: 0.256 to 0.344 (+0.088), delivery from 49\% to 77\%, the healthy share from 23\% to 34\%, and accuracy among delivered answers unchanged within noise (0.47 to 0.44). The failures change kind: at the seed, 165 of 237 undelivered trajectories stopped on a repeated action or three unparseable actions, under the closing harness 13 of 108, the rest exhausting step or call budgets. The harness removed the process failures and carried the agent further into each task, and the correctness of what it submits did not move. Removing the three seed components from the closing harness costs 0.10 on the development tasks (0.396 to 0.297, same night).

\textbf{Where the gain lives decides whether it can go into the weights.} LoRA and full-parameter adapters were trained on the 157 full-score trajectories (35 tasks) that the closing harness produced on the development tasks. Measured the night after the base cells (the closing-harness base anchor re-measured alongside them: 0.375 against 0.396), they are within the 0.064 line of the base model on the 117 test tasks under both harnesses (0.344 against 0.327 and 0.331; 0.256 against 0.216 and 0.231); delivery rises further, to 81\% and 88\%, and accuracy among delivered answers falls, to 0.39 and 0.37. The gain on this benchmark lives in what the model sees, two full pages in the context instead of one, and an adapter that is not shown the second page did not reproduce what the model did with it. Set against Section~\ref{sec:lora}, where every accepted edit changed what the model does (tools, prompts, lessons, blockers) and the adapters carried the gain, this gives a reading of when a harness gain can be moved into the weights: gains from edits to what the model writes trained in, gains from edits to what it sees stayed in the harness, and the surface the accepted edits touched is known before any training is run.

\subsection{Where the harness lever pays, across nine lines}
\label{sec:other-models}
\begin{figure}[t]
\centering
\includegraphics[width=0.82\linewidth]{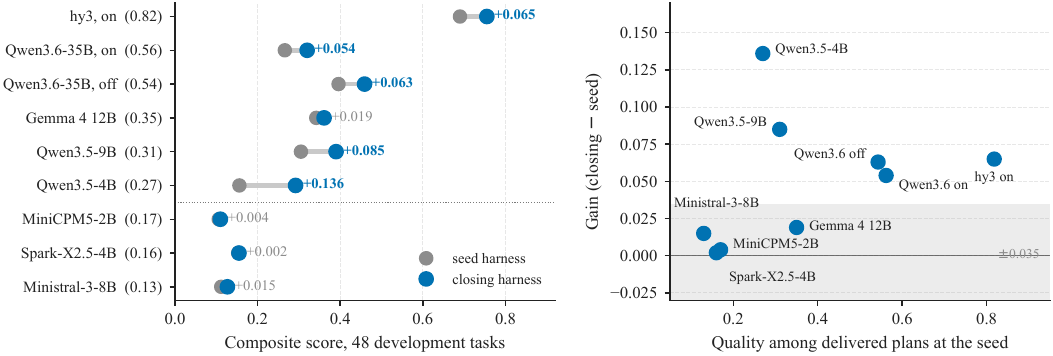}
\caption{\textbf{Harness evolution on the nine DeepPlanning lines reported} (48 development tasks). Left: seed (grey) and closing (blue) score of each line, ordered by the quality of the plans the seed harness delivers (in parentheses; on/off = thinking on or off). Every line is measured under the protocol of Section~\ref{sec:protocol}, the three early lines and Gemma by a same-night re-measurement of seed and closing compositions (Appendix~\ref{app:boundary}). Right: the gain against the seed plan quality; band $= \pm 0.035$, the resolution of the protocol on the Qwen3.5 cells; the Qwen3.6 thinking-on cells are noisier and their +0.053 is within that noise (Appendix~\ref{app:boundary}).}
\label{fig:cross-model}
\end{figure}
Figure~\ref{fig:cross-model} collects the nine DeepPlanning evolution lines reported in this paper, ordered by the quality of the plans the seed harness delivers. Three were evolved before the protocol of Section~\ref{sec:protocol}, with one rollout per reading: two on Qwen3.6-35B-A3B (thinking on and off) and one on Tencent hy3 (thinking on); we re-measured their seed and closing compositions on one night with four rollouts each, and the gains re-measure at +0.063 (Qwen3.6 off), +0.053 (Qwen3.6 on) and +0.065 (hy3), all of it delivery; the Qwen3.6 thinking-on cells are noisier than the Qwen3.5 cells (per-rollout standard deviation up to 0.07), so that gain is within its own noise (Appendix~\ref{app:boundary}). The seed measurements of Section~\ref{sec:taxonomy}, all available before a round is run, separate the nine lines into three groups, and the outcomes follow the grouping. Where the seed harness delivers plans scoring 0.27 or better and F4 is not yet the largest failure class, five lines from a 4B model to hy3, the harness lever paid +0.063 to +0.136 on the development tasks in four and +0.053, within its own noise, in the fifth. Where the delivered plans score 0.13--0.17, three lines, it paid nothing: MiniCPM5-2B, Spark-X2.5-4B and Ministral-3-8B closed within noise of their seeds after five rounds each, although the harness still moved delivery on MiniCPM5-2B (61\% to 71\%) and F1 on Spark-X2.5-4B (21\% to 7\%), with the freed trajectories landing in F4 (Table~\ref{tab:f-rates-other}): only a model that can write a plan worth scoring can cash the move. Where delivery is saturated and F4 is already the largest class among plans worth scoring, one line, Gemma~4~12B, the rule sends the next move to the weights; the harness bought +0.019, within noise, and what its trajectories then taught the weights is in Appendix~\ref{app:boundary}. Two frontier API models run separately by a co-author (Appendix~\ref{app:external}) point the same way: under an evolved harness Kimi K3 gains +0.058 / +0.049 (dev / held-out) with most of the gain in quality among delivered plans, the 9B pattern, and GLM~5.3 reaches 0.512 / 0.488 (no bare control).

\section{Analysis}
\label{sec:analysis}

\subsection{What the harness components buy}
\label{sec:components}
Appendix~\ref{app:ablation} (Tables~\ref{tab:ablation-9b} and~\ref{tab:ablation-4b}) reports component ablations on both sizes, each set run in one night (the 9B set on two identical machines, cross-machine offset below 0.03), with the line's closing harness as the full configuration. For 9B, removing the mid-trajectory checkpoint, the exit gate or the four hard blocks leaves the development score within noise of the full harness (+0.019, +0.017, +0.017); removing the evolved lessons file costs 0.049 on dev and 0.076 on held-out, and removing the ledger, together with the controls that depend on it, costs 0.088 and 0.145. The ledger and the lessons are not additive: with neither or with the ledger alone the score is within noise of the bare model; with both it is +0.076 / +0.092 above it. For 4B, removing the loop's own middleware or its three policy rewrites leaves the score within noise; removing the three invented tools costs 0.071 / 0.090 and removing the lessons file 0.067 / 0.120; removing the exit gate costs nothing on dev but 0.077 on held-out, where delivery drops from 88\% to 73\%. Tools and lessons account for the largest losses; enforcement is within noise except the 4B exit gate, which holds held-out delivery.

\subsection{Why the loop stops at F4}
\label{sec:evidence_package}
The evolution loop can only optimise what it is shown, and what it is shown is the evidence package. Four properties of it bound what the proposer can see. (a) The package leads with intervention process metrics, the components the ablations show do not move the score; the F1--F4 counts come after them, and the scorer's per-check failure composition was never aggregated into the package. (b) The proposal request is a single call with no tools. (c) The keys that define which metrics are observed live outside the four directories the loop may edit. (d) The accepted edits target what the package makes visible: the 4B tools and prompts raise delivery and cut loops, the 9B lessons cut loops, and no edit changed the rate at which the 9B agent submits several drafts of a plan (25.5\% before, 25.0\% after), a failure the scorer sees but the package never surfaced. The ceiling is a property of this loop's evidence package, not of harness evolution in general; a first attempt to surface the failure composition to the proposer (Gemma~4~12B, Appendix~\ref{app:boundary}) did not produce a resolvable gain.

\section{Discussion}
\label{sec:discussion}

\textbf{A diagnose-then-intervene rule.} Measure the noise floor first, then the delivery rate, the quality among delivered plans and the F1--F4 composition on a development set, and branch. (1) Delivered plans score below about 0.18: the harness lever paid nothing on the three such models, moving delivery and F1 while the freed trajectories landed in F4; we did not train them. (2) Delivery well below saturation or F3 dominant: evolve the harness; its gains transferred to held-out tasks in both Qwen lines. (3) Delivery saturated and F4 dominant: train the weights; on 9B the adapter alone was worth as much as the five-round evolution line on held-out, and on 4B the two levers stack. (4) F4 residual after both: the loop as run here cannot see it; expose the scorer's failure composition to the proposer (Section~\ref{sec:evidence_package}). (5) A harness gain in hand: read what the accepted edits changed; where they changed what the model does, the adapter carried the gain (Section~\ref{sec:lora}), and where the gain came from what the model sees, it stayed in the harness (Section~\ref{sec:webarena}).

\textbf{Generality.} Across the nine lines of Section~\ref{sec:other-models}, the seed measurements (delivery, quality, failure composition) separate the five lines on which the harness lever paid (+0.05 to +0.14, the smallest within its own noise) from the three on which it paid nothing; on WebArena-Lite the harness gain held on 117 unseen tasks (+0.088) and training on its trajectories did not add to it.

\textbf{Limitations.} Applying the taxonomy needs trajectory-level process signals and a task-level scorer. The 4B ablation cells were run with two rollouts except the exit-gate cell, and the placebo control exists for 9B only. We do not compare the loop with a matched-budget test-time search baseline \citep{rethinking_harness_eval}. The benchmark's plan-to-JSON conversion is an LLM call that inflates scores by about 0.025 on the sample we audited (Appendix~\ref{app:parser}); every cell uses the same converter. Two comparisons cross nights: the base-model cells under the evolved harness on held-out tasks in Tables~\ref{tab:lora} and~\ref{tab:evolution-summary} are closing anchors from two nights earlier (two rollouts for 4B, so its held-out stacking gain is read against a 0.05 line), and the WebArena-Lite adapters were measured the night after their base cells (Table~\ref{tab:webarena}); every other comparison is same-night.

\bibliography{references}
\bibliographystyle{preprint}

\appendix

\section{Noise Floor and Measurement Protocol}
\label{app:noise}
\input{tab_noise}
The per-rollout standard deviation was measured on 26 same-night cells from the comparison in Table~\ref{tab:lora} and its companion adapters (4B and 9B, 48 or 72 tasks, temperature 0.7, four rollouts each): median 0.023, worst 0.042. The range across the four rollouts of a single cell has median 0.055 and reaches 0.094 (a 4B seed-harness cell read 0.171, 0.096, 0.176 and 0.190). Table~\ref{tab:noise} gives the smallest difference between two cells that we treat as resolvable for a given number of rollouts per cell; the paper uses four rollouts and 0.035 everywhere unless stated. The house rule inside the evolution loop is stricter in one direction: a candidate whose 4-rollout mean falls more than 0.02 below the anchor is rejected, which trades some false rejections for protection against ratchet slippage (Appendix~\ref{app:pitfalls}). On WebArena-Lite the per-rollout standard deviation is about 0.045, so the resolvable difference at four rollouts is 0.064. Two configurations of the same model evaluated on different nights, even with the serving stack pinned, differed by up to 0.05, which is why comparisons in this paper are same-night unless marked.

\section{Full 13-Round Evolution Log (Qwen3.5-4B)}
\label{app:log4b}
\input{tab_evolution_rounds_4b}
Table~\ref{tab:evolution-rounds-4b} lists every candidate with its own reading and the fresh anchor measured after the round. Each round proposed two candidates; 18 of 26 passed the judge, one per round was retained (10 in total) and 8 were rejected, with rounds 4, 5 and 12 rejecting both candidates. The three tool edits (rounds 6, 8, 13) and the lessons file (round 2) are the components that the ablation in Appendix~\ref{app:ablation} finds load-bearing. Round 8, a plan-completeness checker tool, is the largest single step (+0.039 on the fresh anchor). One accepted edit turned out to be inert on inspection (the round-3 policy change activated a self-check whose prompt file did not exist) and another silently removed the seed harness's twenty-call brake on identical calls; the judge cannot distinguish an inert edit from a neutral one, which is one reason we report fresh anchors rather than candidate readings. After round 11 the serving context window was raised from 131k to 262k tokens and the current harness was re-anchored (0.305 to 0.269); the final gain is computed from the re-anchored value.

\section{Full 5-Round Evolution Log (Qwen3.5-9B)}
\label{app:log9b}
\input{tab_evolution_rounds_9b}
Table~\ref{tab:evolution-rounds-9b} gives the same record for the 9B line. The 9B line accepted three candidates in five rounds, all of them edits to the distilled lessons file (plan-emission format, hotel-location checks, standalone plan emission); every tool the proposer wrote was rejected because the agent never called it, and the last four candidates read 0.018--0.033 below the anchor and were rejected. The closing harness is therefore the seed harness plus one lessons file, which is why the 9B ablation (Appendix~\ref{app:ablation}) can isolate the whole gain of the line in one cell. An earlier 9B line that we do not report locked its anchor on a single lucky evaluation and then rejected eight consecutive candidates; the fresh-anchor protocol was introduced after it.

\section{Evolution Summary Tables and Per-Round Failure Composition}
\label{app:frates}
\input{tab_evolution_summary}
\input{tab_f_rates}
\input{tab_decomp}
Table~\ref{tab:evolution-summary} re-measures the seed and closing harness of both lines on both splits; Tables~\ref{tab:f-rates} and~\ref{tab:decomp} give the seed and closing compositions and the delivery / quality decomposition referred to in Sections~\ref{sec:taxonomy} and~\ref{sec:evolution}. Figure~\ref{fig:frate_breakdown} plots, for each round in which an edit was accepted, the failure composition of the fresh anchor measured after that round.

\section{Training Details, Rank Ablation and Full-Parameter Fine-Tuning}
\label{app:training}
\begin{figure}[t]
\begin{center}
\includegraphics[width=0.98\linewidth]{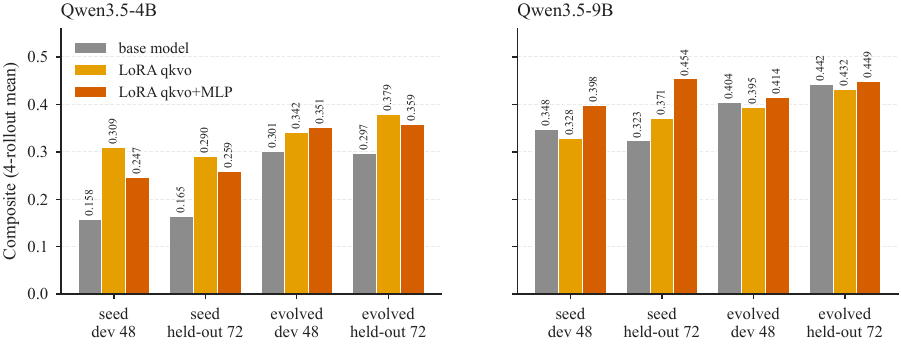}
\end{center}
\caption{\textbf{Base model and LoRA adapters under the seed and the evolved harness}, same machine, 4-rollout means (Table~\ref{tab:lora}; provenance of the evolved-harness held-out base cells in its caption). On the seed harness the adapter alone recovers as much as harness evolution; on the evolved harness the adapter adds to the 4B score, and for 9B the adapter alone already reaches the evolved-harness level.}
\label{fig:lora}
\end{figure}

\input{tab_training}
\input{tab_f_rates_lora}
\input{tab_placebo}
\begin{figure}[h]
\begin{center}
\includegraphics[width=0.98\linewidth]{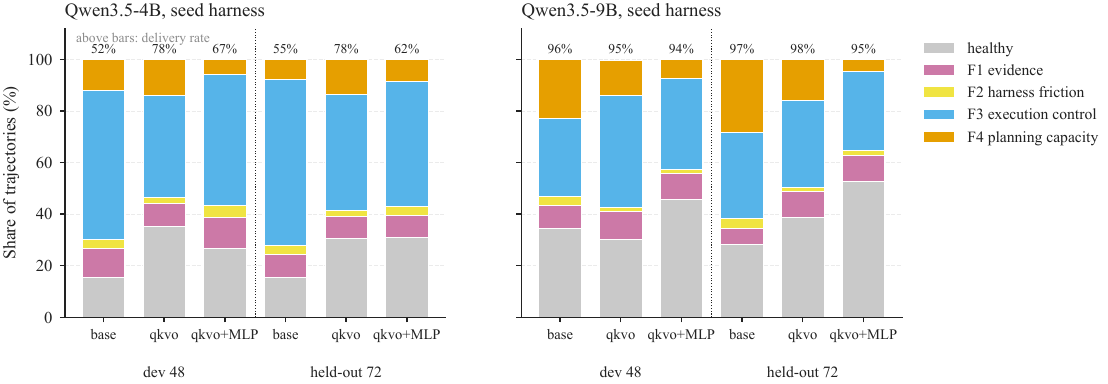}
\end{center}
\caption{\textbf{Failure composition of the base model and the LoRA adapters under the seed harness} (the cells of Table~\ref{tab:f-rates-lora}; numbers above the bars are delivery rates). The 4B adapters raise delivery and shrink the loop class F3; the 9B adapters shrink the content class F4.}
\label{fig:lora-frate}
\end{figure}
All adapters use rank 32, $\alpha=64$, dropout 0.05, AdamW with weight decay 0.01, gradient clipping at 1.0, a constant learning rate of $10^{-4}$ and sixteen step samples per optimiser step; the adapters in Table~\ref{tab:lora} were trained for two epochs and taken at the end of the first, because the second epoch memorises. The placebo permutes the assistant outputs across samples while leaving every input unchanged, so the adapter sees the same prompts, the same harness structure and the same number of steps. The rank ablation and the full-parameter comparison (Table~\ref{tab:training}) use a larger second textbook (198 trajectories over 42 tasks for 4B, 200 over 44 for 9B, one epoch) collected the same way; full-parameter training uses learning rate $2\times10^{-6}$ with 5\% warm-up and cosine decay, fully sharded, and a higher learning rate without warm-up diverged for both sizes. Table~\ref{tab:f-rates-lora} and Figure~\ref{fig:lora-frate} give the failure composition behind finding (iv) of Section~\ref{sec:lora}. On the evolved harness the 9B adapters cut F4 as well (qkvo+MLP: 22\% to 6\% on dev, 19\% to 6\% on held-out), but the freed share reappears as F3 on dev and as F1 on held-out rather than as healthy trajectories, and the mean score is unchanged: the adapter and the evolved lessons file move the same failure class, which is the substitution reported in Section~\ref{sec:lora}.

\section{Component Ablations}
\label{app:ablation}
\input{tab_ablation_9b}
\input{tab_ablation_4b}
Both ablations remove components from the closing harness of the corresponding line. The 9B cells were run on two identical machines on one night (full, minus ledger, minus gate, minus checkpoint on one; minus lessons, minus hard blocks and bare on the other), with a cross-machine offset of at most 0.03 checked by re-running the full configuration on both. The 4B cells were run on one machine in one boot with two rollouts per cell (read as resolvable above 0.05), except the exit-gate cell, which was re-run with four. Removing the ledger in the 9B harness also removes the gate and checkpoint that depend on it. In the 9B minus-ledger cell, 66 of 95 final plans still contained a reconciliation section and 61 cited ledger entries that did not exist. The paired comparison quoted in Section~\ref{sec:evolution} restricts each pair of 4B cells to the tasks both deliver: full versus bare is +0.011 over 35 development tasks and +0.022 over 48 held-out tasks; the seed kit versus bare is $-0.027$ / $-0.009$. The composition of violated scorer checks (14 check types) has the same shape in every 4B cell, and the rate of trajectories that submit several drafts falls from 8.9\% (bare) to 1.0\% (full).

\section{Measurement Pitfalls and Mitigations}
\label{app:pitfalls}
We document three pitfalls encountered in our own measurement process.

\textbf{Winner's curse.} Selecting the best of $K$ candidates inflates the selected reading \citep{winners_curse}. With per-rollout $\sigma \approx 0.023$ and two candidates per round, an accepted candidate's own reading overstates its effect; in the 4B line the candidate readings at rounds 8, 10 and 11 were 0.316, 0.327 and 0.323 while the fresh anchors of the same harnesses were 0.282, 0.288 and 0.305. We report anchors, never candidate readings; Figure~\ref{fig:winners-curse} plots every retained candidate's reading against the fresh anchor of the same harness.

\begin{figure}[h]
\begin{center}
\includegraphics[width=0.36\linewidth]{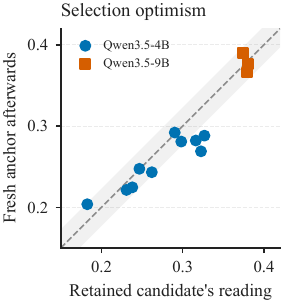}
\end{center}
\caption{\textbf{Selection optimism in the two evolution lines} (Proposition~\ref{prop:optimism}). Each point is a round with an accepted edit: the retained candidate's own 4-rollout reading against the fresh anchor of the same harness measured afterwards; band $= \pm\sigma$ around the diagonal. Points below the diagonal are readings that overstated the harness.}
\label{fig:winners-curse}
\end{figure}

\textbf{Ratchet slippage.} In a sequence of accepts, a lucky round sets the next reference above the true value, and a subsequent neutral edit then reads as a regression and is rejected. Our first 9B line suffered a severe form of this: one outlier accept locked the anchor high and eight consecutive candidates were rejected within noise of it. The mitigation is the fresh anchor: after every accept, the new harness is re-measured before the next candidate is judged.

\textbf{Inert accepts.} An edit that never executes (a middleware wired to the wrong round index, a policy that references a missing file) reads as neutral and can pass the judge, a cousin of the phantom guardrails of \citet{phantom_guardrails}, which fix failures that never happened. One such edit was accepted in the 4B line and three in the WebArena-Lite line. We now require, before an edit is judged, evidence that its code path fired at least once in the evaluation run.

\section{Further Lines in Detail}
\label{app:boundary}
\input{tab_webarena}
Figure~\ref{fig:cross-model} (Section~\ref{sec:other-models}) summarises the nine DeepPlanning lines reported, and Table~\ref{tab:f-rates-other} gives the failure composition of the three smaller non-Qwen ones; the paragraphs below give the mechanism behind each number.

\textbf{Two further models, three lines (August 2026).} Qwen3.6-35B-A3B was run with thinking off from the seed harness and then with thinking on from the thinking-off line's closing harness, and Tencent hy3 with thinking on from the seed harness, one rollout per reading and the judge comparing each candidate with the previous round's reading; the 4-rollout protocol and fresh anchors of Section~\ref{sec:protocol} came later, so the line readings below are single-rollout and the numbers reported in Section~\ref{sec:other-models} are the same-night re-measurements at the end of this paragraph.

Qwen3.6-35B-A3B, thinking off: 25 rounds, 6 accepts, 0.404 to 0.496, with the no-submission rate falling from 23\% to zero and every accept a selective blocker or a tool. Thinking on: 19 rounds, 5 accepts, 0.387 to 0.535, every accept an instruction-layer edit and four of the five on how the ledger is written; on the 72 held-out tasks the closing harness scored 0.512 against 0.325 for the seed (one rollout each). hy3, thinking on: a first batch of 10 rounds in which four legal accepts summed to a net loss (the ratchet slippage of Appendix~\ref{app:pitfalls}), then a clean restart, 5 rounds, 2 accepts, 0.732 to 0.807, again both lessons.

Before finalising the paper we re-measured the seed and closing compositions of all three lines under the full protocol (two runs of two rollouts per cell, temperature 0 as in August). Qwen3.6, thinking off: seed 0.396 (delivery 73\%, healthy 44\%, F1 10\%, F2 32\%, F3 9\%, F4 5\%), closing 0.459 (delivery 91\%, healthy 48\%, F1 14\%, F2 4\%, F3 24\%, F4 10\%), accuracy among delivered plans 0.54 against 0.50: +0.063, all of it delivery, with the harness-friction class F2 removed and the freed trajectories landing mostly in F3. Thinking on: seed 0.266 (delivery 47\%, healthy 26\%, F1 58\%, F3 6\%, F4 10\%), closing 0.320 (delivery 62\%, healthy 29\%, F1 46\%, F3 12\%, F4 13\%), accuracy 0.56 against 0.52: +0.053, again delivery, with the evidence class F1 shrinking. All shares use every trajectory as the denominator. The per-rollout standard deviation in these four cells is 0.02--0.07 against 0.023 for Qwen3.5, so the thinking-off gain is resolvable and the thinking-on gain is within the noise of its own cells. hy3: seed 0.690 (delivery 84\%, healthy 71\%, F1 15\%, F2 6\%, F3 8\%), closing 0.755 (delivery 92\%, healthy 80\%, F1 8\%, F2 5\%, F3 6\%), accuracy 0.818 against 0.824: +0.065, the blank-submission lesson.

In all three lines the gain is delivery, not the accuracy of what is delivered; which process class the harness removes differs by line (F2 for Qwen3.6 off, F1 for Qwen3.6 on and hy3), and in the Qwen3.6 lines the loop class F3 grows as more trajectories run to submission. A 4-rollout re-reading of the Qwen3.6 closing harness four days after the line closed came in at 0.44--0.46 against the single-rollout 0.535; that drift is what led to the same-night rule of Section~\ref{sec:protocol}.

\textbf{Three non-Qwen families.} The same loop, judge and seed harness were run for five rounds (ten candidates) on MiniCPM5-2B, Spark-X2.5-4B and Ministral-3-8B. The diagnosis step of the rule separates these three from the Qwen lines before any evolution: the healthy share is 0--6\% against 15--34\% for the Qwen seeds (Table~\ref{tab:f-rates-lora}), the plans that are delivered score 0.13--0.17 against 0.27--0.31 (Table~\ref{tab:decomp}), and on Spark and Ministral F4 is the largest class at the seed (38\% and 65\%). All three lines closed within noise of their seed anchor, and the composition says why in each case.

MiniCPM5-2B accepted five of ten candidates and gained delivery (61\% to 71\%), but the plans it was pushed to deliver score lower than the ones it delivered already (mean among delivered 0.173 to 0.156), so the product did not move; for this model the seed harness itself had been the large gain (0.047 in the official loop to 0.106, more than double).

Spark-X2.5-4B accepted six candidates that cut F1 from 21\% to 7\% without moving delivery (97\% to 95\%) or the score: the freed trajectories landed in F4 (38\% to 60\%).

Ministral-3-8B starts with F2 and F3 at 7\% together, so the harness has almost no process failure to act on; five of its ten candidates edited the exit gate, eight of the nine rejections were triggered by the proposer's own rollback conditions, and the one accept was a syntax fix.

On all three the loop's candidates went where Section~\ref{sec:evidence_package} predicts, to the enforcement layer. These lines ran with the evidence package as it was before the failure composition was added to it; the compositions in Table~\ref{tab:f-rates-other} are computed after the fact from the anchor trajectories.

\textbf{Gemma~4~12B.} A ten-round line moved the anchor from 0.330 to 0.386, but the anchors were collected across several days on different machines; a same-night re-measurement of seed and evolved harness on the same machine gave +0.019 (dev) and +0.012 (held-out), within noise.

Training on this line's trajectories reproduced the ordering of Table~\ref{tab:lora} at a magnitude within the resolution line: under the seed harness, LoRA gave +0.034 / +0.018 and full-parameter +0.038 / $-0.003$ (dev / held-out), which puts the trained weights under the seed harness where the base model sits under the evolved harness, and stacking them on the evolved harness added nothing (LoRA $-0.042$ / $-0.013$, full-parameter +0.019 / $-0.040$).

This model enters with delivery at 98\% and F4 the largest class (40\% of trajectories), the composition the rule assigns to the weights, so a small harness gain is the branch the rule predicts; the current training recipe, on trajectories from a harness that had barely moved, bought development-task gains that did not carry to held-out, so on this model the weights branch remains open.

For this loop the evidence package was extended to lead with the scorer's aggregated failure composition (the change proposed in Section~\ref{sec:evidence_package}), which is why the Gemma line is the first on which the proposer could see F4-relevant evidence; its gains were nonetheless within noise.

\textbf{WebArena-Lite.} Section~\ref{sec:webarena} and Table~\ref{tab:webarena} report this line. Two details belong here. Three middleware edits accepted in rounds 1--9 had never executed because of a round-indexing bug in the loop (Appendix~\ref{app:pitfalls}); with the bug fixed, the closing composition re-anchored from 0.396 to 0.375, within noise, so the line's gain rests on the interface edits alone. Under the closing harness the undelivered test trajectories stop mostly on the 40-call budget (77 of 108): the closing harness reads more of each page and walks further, which is the observational nature of its gain.
\input{tab_f_rates_other}

\section{Plan-to-JSON Conversion Bias}
\label{app:parser}
DeepPlanning scores a plan only after an LLM converts the agent's free-text plan into a structured form. We audited this step because our agents often submit several drafts of a plan in one answer (25.0\% of the 9B closing harness's answers, counting the 188 scored answers; 1.0\% of the 4B closing harness's). On 20 such multi-draft answers, four converters (the default \texttt{qwen-plus}, DeepSeek V4.1 Flash, Kimi K3 and GPT-5.6 Luna) each silently selected a single draft in 20 of 20 cases, hiding the fact that the agent never converged; only GPT-6 Astra, at 64 times the cost, kept all drafts in 11 cases and selected in 3. A manual audit of 19 answers on which Astra and the default converter disagreed found Astra faithful in 18. Unfaithful conversion inflates the composite score by about 0.025 on average (0.09 on the five hardest answers) relative to faithful conversion. Eleven further converters tested on the five hardest answers against 23 manually established checkpoints did no better than 13 of 23, against 7--9 for the default. The root cause is that the official conversion prompt states no rules and its examples demonstrate rewriting; appending a short rule list to the prompt brings GPT-5.6 Terra to 23 of 23 and the default converter to 20 of 23. We reported the issue and the fix upstream. All numbers in this paper use the default converter, whose repeat agreement with itself is 93.5\% at the default temperature (96.8\% at temperature 0), so every cell is converted the same way; because the bias depends on the multi-draft rate, which differs across configurations (Appendix~\ref{app:ablation}), we report that rate and read differences near 0.025 with care.

\clearpage
\section{Two Frontier API Models Under the Evolved Harness}
\label{app:external}
A co-author ran Kimi K3 and GLM~5.3 as the acting agent on DeepPlanning on a separate machine, four rollouts per cell on the 48 development and 72 held-out tasks, without and with an evolved harness from this paper's lines (Table~\ref{tab:external-models}). Kimi K3 gains +0.058 (development) and +0.049 (held-out); Equation~\ref{eq:identity} on the development means puts 0.010 of it on delivery (96.9\% to 99.3\%) and 0.048 on the quality among delivered plans (0.431 to 0.479), the shape of the 9B line, and mean steps fall by more than half (19.7 to 8.4 and 23.3 to 10.1), the signature of removed loops. GLM~5.3 reaches 0.512 / 0.488 under the evolved harness; without a bare control its harness gain is not identified, and its 0.036 / 0.037 margin over Kimi is a difference between models, not a gain. Per-run readings, run identifiers, serving settings and the mapping to a versioned harness checkpoint are not in the repository at the time of writing and will be released with the code; these cells are therefore not same-night matched with any other cell, carry no noise estimate of their own, and their failure labels come from an earlier labelling scheme and are not pooled with Section~\ref{sec:taxonomy}. They are an aggregate check of harness transfer, not evolution lines, and are not counted among the nine lines of Section~\ref{sec:other-models}.

\begin{table}[H]
\caption{\textbf{Kimi K3 and GLM~5.3 as the acting agent} (four-rollout means on 48 development (D) / 72 held-out (H) tasks, run by a co-author on a separate machine; quality = mean score among delivered development plans). GLM~5.3 has no bare control; no confidence intervals are given because per-run readings are not attached.}
\label{tab:external-models}
\begin{center}
\begin{small}
\begin{tabular}{llccccc}
\toprule
Model & Harness & D score & H score & Delivery D / H & Mean steps D / H & Quality D\\
\midrule
Kimi K3 & Bare & 0.418 & 0.402 & 96.9\% / 95.2\% & 19.7 / 23.3 & 0.431\\
Kimi K3 & Evolved & \textbf{0.476} & \textbf{0.451} & 99.3\% / 98.7\% & 8.4 / 10.1 & 0.479\\
GLM 5.3 & Evolved & 0.512 & 0.488 & 99.6\% / 99.1\% & 7.6 / 9.2 & 0.514\\
\bottomrule
\end{tabular}
\end{small}
\end{center}
\end{table}

\end{document}

%% file: math_commands.tex
\usepackage{amsmath,amsfonts,bm}

\def\eqref#1{equation~\ref{#1}}

\def\1{\bm{1}}

\DeclareMathAlphabet{\mathsfit}{\encodingdefault}{\sfdefault}{m}{sl}
\SetMathAlphabet{\mathsfit}{bold}{\encodingdefault}{\sfdefault}{bx}{n}



%% file: tab_lora_results.tex
\begin{table}[t]
\caption{Base model and LoRA adapters trained on evolved-harness trajectories, under the seed and the evolved harness, on the 48 development and 72 held-out tasks. 4 rollouts per cell on one machine and one night, except the base-model cells under the evolved harness on held-out tasks, the lines' closing anchors from two nights earlier (4B: 2 rollouts); differences below 0.035 (0.05 against a 2-rollout cell) are within noise. Base-row delivery rates differ from Table~\ref{tab:decomp} (a different night).}
\label{tab:lora}
\begin{center}
\begin{small}
\resizebox{\linewidth}{!}{\begin{tabular}{llcccccccc}
\toprule
 & & \multicolumn{4}{c}{Qwen3.5-4B} & \multicolumn{4}{c}{Qwen3.5-9B}\\
 & & \multicolumn{2}{c}{Seed harness} & \multicolumn{2}{c}{Evolved harness} & \multicolumn{2}{c}{Seed harness} & \multicolumn{2}{c}{Evolved harness}\\
Quantity & Weights & Dev & Held-out & Dev & Held-out & Dev & Held-out & Dev & Held-out\\ \midrule
Score & Base & 0.158 & 0.164 & 0.301 & 0.297 & 0.348 & 0.323 & 0.404 & 0.442\\
Score & LoRA qkvo & \textbf{0.309} & \textbf{0.290} & 0.342 & \textbf{0.379} & 0.327 & 0.371 & 0.395 & 0.433\\
Score & LoRA qkvo+MLP & 0.247 & 0.259 & \textbf{0.351} & 0.359 & \textbf{0.398} & \textbf{0.454} & 0.414 & 0.449\\
\addlinespace
Delivery rate & Base & 52\% & 55\% & 90\% & 90\% & 96\% & 97\% & 98\% & 99\%\\
Delivery rate & LoRA qkvo & 78\% & 78\% & 95\% & 97\% & 95\% & 98\% & 100\% & 98\%\\
Delivery rate & LoRA qkvo+MLP & 67\% & 62\% & 90\% & 90\% & 94\% & 95\% & 96\% & 98\%\\
\bottomrule
\end{tabular}}
\end{small}
\end{center}
\end{table}

%% file: tab_noise.tex
\begin{table}[!htbp]
\caption{Smallest difference between two cells that we treat as resolvable, as a function of the number of rollouts averaged per cell, given a per-rollout standard deviation of 0.023 (median over 26 same-night DeepPlanning cells; worst cell 0.042). The paper uses four rollouts per cell.}
\label{tab:noise}
\begin{center}
\begin{small}
\begin{tabular}{cc}
\toprule
Rollouts per cell & Resolvable difference\\ \midrule
1 & 0.07\\
2 & 0.05\\
\textbf{4} & \textbf{0.035}\\
8 & 0.025\\
16 & 0.02\\
\bottomrule
\end{tabular}
\end{small}
\end{center}
\end{table}

%% file: tab_evolution_rounds_4b.tex
\begin{table}[!htbp]
\caption{Qwen3.5-4B evolution line, round by round. Each round proposed two candidates; the retained edit is listed with the harness surface it touched, the other candidate with its verdict. Anchor = fresh 4-rollout mean of the retained harness on the 48 development tasks, measured after the round; F3 = share of loop failures in that anchor. Rounds 4, 5 and 12 rejected both candidates and kept the previous anchor.}
\label{tab:evolution-rounds-4b}
\begin{center}
\begin{scriptsize}
\resizebox{\linewidth}{!}{\begin{tabular}{clllcc}
\toprule
Round & Retained edit & Surface & Other candidate (verdict) & Anchor & F3\\ \midrule
0 & seed harness & & & 0.156 & 64.1\%\\
1 & plan\_submission\_format\_fix & prompts & malformed\_plan\_tag\_middleware (accepted, not retained) & 0.204 & 66.7\%\\
2 & plan\_emission\_lessons (new lessons file) & prompts & plan\_tool\_call\_interceptor (accepted, not retained) & 0.221 & 42.2\%\\
3 & gate\_self\_check\_activation & policies & plan\_envelope\_rescue\_middleware (accepted, not retained) & 0.225 & 38.0\%\\
4 & none & & plan\_format\_middleware, plan\_submission\_tool (both rejected) & 0.225 & \\
5 & none & & coordinate\_fallback\_lesson, plan\_truncation\_guard (both rejected) & 0.225 & \\
6 & coordinate\_lookup\_fallback\_tool & tools & plan\_emission\_force\_syntax\_fix (accepted, not retained) & 0.247 & 35.9\%\\
7 & citation\_format\_normalizer & middleware & plan\_truncation\_rescue\_middleware (rejected) & 0.243 & 33.3\%\\
8 & plan\_completeness\_checker\_tool & tools & budget\_constraint\_checker\_tool (accepted, not retained) & 0.282 & 35.4\%\\
9 & checkpoint\_fuse\_activation & policies & plan\_submission\_format\_enforcement (accepted, not retained) & 0.281 & 38.5\%\\
10 & plan\_submission\_checkpoint & middleware & coordinate\_query\_fallback\_middleware (accepted, not retained) & 0.288 & 37.8\%\\
11 & plan\_truncation\_detection & middleware & citation\_format\_normalizer\_enhancement (accepted, not retained) & 0.305 / 0.269$^\dagger$ & 33.3\%\\
12 & none & & threshold\_reduction, fabricated\_info\_gate\_blocker (both rejected) & 0.269 & \\
13 & flight\_time\_sorter\_tool & tools & citation\_bracket\_enforcement (rejected) & \textbf{0.292} & 37.0\%\\
\bottomrule
\end{tabular}}
\end{scriptsize}
\end{center}
\vskip 0.3em
\noindent\scriptsize $^\dagger$ 0.305 with the 131k-token serving context; the same harness re-anchored at 0.269 after the context window was raised to 262k tokens, and all later rounds use the 262k setting.
\end{table}

%% file: tab_evolution_rounds_9b.tex
\begin{table}[!htbp]
\caption{Qwen3.5-9B evolution line, candidate by candidate (5 rounds, 10 candidates, 3 accepted). Reading = the candidate's own 4-rollout mean when judged; anchor = fresh 4-rollout mean of the retained harness after the round. All three accepted edits modify the distilled lessons file; every proposed tool was rejected because the agent never called it.}
\label{tab:evolution-rounds-9b}
\begin{center}
\begin{small}
\resizebox{\linewidth}{!}{\begin{tabular}{cllclc}
\toprule
Round & Candidate & Surface & Reading & Verdict & Anchor after round\\ \midrule
0 & seed harness & & & & 0.305\\
1 & plan\_emission\_format\_lessons & prompts & 0.380 & accepted & \multirow{2}{*}{0.377}\\
1 & query\_constraint\_extractor\_tool & tools & 0.335 & rejected (never called) & \\
2 & hotel\_location\_lessons & prompts & 0.379 & accepted & \multirow{2}{*}{0.366}\\
2 & hotel\_location\_validator\_tool & tools & 0.399 & rejected (never called) & \\
3 & standalone\_plan\_emission\_lessons & prompts & 0.374 & accepted & \multirow{2}{*}{\textbf{0.390}}\\
3 & plan\_submission\_reminder\_tool & tools & 0.378 & rejected (never called) & \\
4 & query\_constraint\_extractor\_v2 & tools & 0.372 & rejected ($-0.018$ vs anchor) & \multirow{2}{*}{0.390}\\
4 & route\_fabrication\_lessons & prompts & 0.357 & rejected ($-0.033$ vs anchor) & \\
5 & fabricated\_data\_verification\_lessons & prompts & 0.361 & rejected ($-0.029$ vs anchor) & \multirow{2}{*}{0.390}\\
5 & plan\_emission\_force\_threshold\_reduction & middleware & 0.359 & rejected ($-0.031$ vs anchor) & \\
\bottomrule
\end{tabular}}
\end{small}
\end{center}
\end{table}

%% file: tab_evolution_summary.tex
\begin{table}[!htbp]
\caption{Harness evolution before and after. Score = 4-run mean composite; delivery rate = fraction of tasks on which a parsable plan was returned. Seed harness and final evolved harness were re-evaluated outside the loop on the same machine, on the 48 development tasks and the 72 held-out tasks; the evolved-harness held-out cells are the lines' closing anchors measured two nights before the other three cells (4B: 2 rollouts). The 4B line ran 13 rounds (10 accepted edits); the 9B line ran 5 rounds (3 accepted edits). The 4-run discrimination threshold is 0.035; smaller differences are within noise.}
\label{tab:evolution-summary}
\begin{center}
\begin{small}
\begin{tabular}{llcccccc}
\toprule
 & & \multicolumn{3}{c}{Development (48 tasks)} & \multicolumn{3}{c}{Held-out (72 tasks)}\\
Model & Quantity & Seed & Evolved & Gain & Seed & Evolved & Gain\\ \midrule
Qwen3.5-4B & Score & 0.158 & 0.301 & \textbf{+0.143} & 0.164 & 0.297 & \textbf{+0.133}\\
Qwen3.5-4B & Delivery rate & 52\% & 90\% & +38 pp & 55\% & 90\% & +35 pp\\[3pt]
Qwen3.5-9B & Score & 0.348 & 0.404 & \textbf{+0.056} & 0.323 & 0.442 & \textbf{+0.119}\\
Qwen3.5-9B & Delivery rate & 96\% & 98\% & +2 pp & 97\% & 99\% & +2 pp\\
\bottomrule
\end{tabular}
\end{small}
\end{center}
\end{table}

%% file: tab_f_rates.tex
\begin{table}[!htbp]
\caption{Failure composition of the seed harness and of the final evolved harness on the 48 development tasks (share of trajectories; fresh anchors, 4 rollouts each except the 9B seed row, which is the 6-rollout line baseline; labels as defined in Section~\ref{sec:taxonomy}). Rows sum to 100\% up to rounding. The largest change in both lines is the fall in loop failures (F3); F4 is not reduced and ends as the largest failure class for 9B and the second-largest, after the residual F3, for 4B.}
\label{tab:f-rates}
\begin{center}
\begin{small}
\resizebox{\linewidth}{!}{\begin{tabular}{llccccc}
\toprule
Model & Harness & Healthy & F1 evidence & F2 harness friction & F3 execution control & F4 planning capacity\\ \midrule
Qwen3.5-4B & Seed & 14.3\% & 8.6\% & 2.9\% & \textbf{64.1\%} & 9.9\%\\
Qwen3.5-4B & Evolved (round 13) & 31.8\% & 3.1\% & 0.0\% & \textbf{37.0\%} & \textbf{28.1\%}\\
\addlinespace
Qwen3.5-9B & Seed & 26.7\% & 5.2\% & 2.1\% & \textbf{38.9\%} & 27.1\%\\
Qwen3.5-9B & Evolved (round 3) & 40.6\% & 17.7\% & 0.0\% & 16.7\% & \textbf{25.0\%}\\
\bottomrule
\end{tabular}}
\end{small}
\end{center}
\end{table}

%% file: tab_decomp.tex
\begin{table}[!htbp]
\caption{Score decomposition on the 48 development tasks: composite score $=$ delivery rate $\times$ mean score among delivered plans. Delivery = the trajectory returned a plan the scorer could parse. Seed and evolved anchors of each evolution line (4 rollouts; the 9B seed row is the 4-rollout subset of the 6-rollout line baseline). MiniCPM5-2B is included as the case in which the two factors cancel.}
\label{tab:decomp}
\begin{center}
\begin{small}
\resizebox{\linewidth}{!}{\begin{tabular}{llccc}
\toprule
Model & Harness & Composite score & Delivery rate & Score among delivered\\ \midrule
Qwen3.5-4B & Seed & 0.156 & 59\% & 0.266\\
Qwen3.5-4B & Evolved (13 rounds) & 0.292 & \textbf{88\%} & 0.332\\
\addlinespace
Qwen3.5-9B & Seed & 0.299 & 97\% & 0.309\\
Qwen3.5-9B & Evolved (5 rounds) & 0.390 & 99\% & \textbf{0.392}\\
\addlinespace
MiniCPM5-2B & Official benchmark loop & 0.047 & 33\% & 0.141\\
MiniCPM5-2B & Seed & 0.106 & 61\% & 0.173\\
MiniCPM5-2B & Evolved (5 rounds) & 0.110 & 71\% & 0.156\\
\bottomrule
\end{tabular}}
\end{small}
\end{center}
\end{table}

%% file: tab_training.tex
\begin{table}[!htbp]
\caption{Second textbook (198 trajectories over 42 tasks for 4B; 200 over 44 for 9B; one epoch), seed harness, same machine and night per block, 4 rollouts. Top: LoRA versus full-parameter fine-tuning for both sizes. Bottom: rank ablation on 9B. Each adapter in the bottom block is scored against the base model measured on its own machine on the same night (the base drifts by up to 0.035 within seven hours on one machine), so its gains are comparable across ranks but are not differences from the 0.306 / 0.370 row above, and the rank-64 absolute scores are not comparable to the other rows.}
\label{tab:training}
\begin{center}
\begin{small}
\resizebox{\linewidth}{!}{\begin{tabular}{llcccc}
\toprule
Model & Weights & Dev score & Held-out score & Dev gain over base & Held-out gain over base\\ \midrule
Qwen3.5-4B & Base & 0.156 & 0.150 & & \\
Qwen3.5-4B & LoRA qkvo & 0.193 & 0.212 & +0.037 & +0.062\\
Qwen3.5-4B & LoRA qkvo+MLP & 0.280 & 0.267 & +0.124 & +0.117\\
Qwen3.5-4B & Full-parameter & \textbf{0.294} & \textbf{0.311} & +0.138 & +0.161\\
\addlinespace
Qwen3.5-9B & Base & 0.306 & 0.370 & & \\
Qwen3.5-9B & LoRA qkvo+MLP & 0.426 & 0.442 & +0.120 & +0.072\\
Qwen3.5-9B & Full-parameter & \textbf{0.438} & \textbf{0.484} & +0.132 & +0.114\\
\midrule
Qwen3.5-9B & LoRA rank 16, $\alpha = 32$ & 0.425 & 0.424 & +0.091 & +0.058\\
Qwen3.5-9B & LoRA rank 32, $\alpha = 64$ & 0.419 & 0.401 & +0.085 & +0.035\\
Qwen3.5-9B & LoRA rank 64, $\alpha = 128$ (second machine) & 0.415 & 0.430 & +0.116 & +0.085\\
Qwen3.5-9B & Full-parameter (same night as ranks) & 0.447 & 0.456 & +0.139 & +0.093\\
\bottomrule
\end{tabular}}
\end{small}
\end{center}
\end{table}

%% file: tab_f_rates_lora.tex
\begin{table}[!htbp]
\caption{Failure composition of the base model and the LoRA adapters under the seed harness, same machine and night as Table~\ref{tab:lora} (share of trajectories, 4 rollouts; labels as in Section~\ref{sec:taxonomy}; rows sum to 100\% up to rounding). The 4B adapter moves delivery and F3; the 9B adapter moves F4.}
\label{tab:f-rates-lora}
\begin{center}
\begin{small}
\resizebox{\linewidth}{!}{\begin{tabular}{lllcccccc}
\toprule
Model & Adapter & Split & Delivery & Healthy & F1 evidence & F2 harness friction & F3 execution control & F4 planning capacity\\ \midrule
Qwen3.5-4B & none & dev 48 & 52\% & 15.6\% & 10.9\% & 3.6\% & \textbf{57.8\%} & 12.0\%\\
Qwen3.5-4B & LoRA qkvo & dev 48 & \textbf{78\%} & 35.4\% & 8.9\% & 2.1\% & \textbf{39.6\%} & 14.1\%\\
Qwen3.5-4B & LoRA qkvo+MLP & dev 48 & 67\% & 26.6\% & 12.0\% & 4.7\% & 51.0\% & 5.7\%\\
Qwen3.5-4B & none & held-out 72 & 55\% & 15.3\% & 9.0\% & 3.5\% & \textbf{64.6\%} & 7.6\%\\
Qwen3.5-4B & LoRA qkvo & held-out 72 & \textbf{78\%} & 30.6\% & 8.7\% & 2.1\% & \textbf{45.1\%} & 13.5\%\\
Qwen3.5-4B & LoRA qkvo+MLP & held-out 72 & 62\% & 30.9\% & 8.7\% & 3.5\% & 48.6\% & 8.3\%\\
\addlinespace
Qwen3.5-9B & none & dev 48 & 96\% & 34.4\% & 8.9\% & 3.6\% & 30.2\% & \textbf{22.9\%}\\
Qwen3.5-9B & LoRA qkvo & dev 48 & 95\% & 30.2\% & 10.9\% & 1.6\% & 43.2\% & 13.5\%\\
Qwen3.5-9B & LoRA qkvo+MLP & dev 48 & 94\% & 45.8\% & 9.9\% & 1.6\% & 35.4\% & \textbf{7.3\%}\\
Qwen3.5-9B & none & held-out 72 & 97\% & 28.1\% & 6.2\% & 3.8\% & 33.7\% & \textbf{28.1\%}\\
Qwen3.5-9B & LoRA qkvo & held-out 72 & 98\% & 38.5\% & 10.4\% & 1.4\% & 33.7\% & 16.0\%\\
Qwen3.5-9B & LoRA qkvo+MLP & held-out 72 & 96\% & 52.8\% & 10.1\% & 1.7\% & 30.9\% & \textbf{4.5\%}\\
\bottomrule
\end{tabular}}
\end{small}
\end{center}
\end{table}

%% file: tab_placebo.tex
\begin{table}[t]
\caption{Placebo control on Qwen3.5-9B: the real adapter and a placebo adapter trained with the identical recipe (rank 32, $\alpha = 64$, qkvo+MLP, one epoch) on the same trajectories, except that the placebo's assistant outputs are permuted across samples. All cells on the same machine and night, seed harness, 4 rollouts. Rounds = mean number of agent turns per trajectory, pooled over both splits; the placebo hit the 100-turn cap in 9 of 96 development and 44 of 288 held-out trajectories.}
\label{tab:placebo}
\begin{center}
\begin{small}
\resizebox{\linewidth}{!}{\begin{tabular}{lccccc}
\toprule
 & \multicolumn{2}{c}{Development (48 tasks)} & \multicolumn{2}{c}{Held-out (72 tasks)} & \\
Weights & Score & Delivery rate & Score & Delivery rate & Rounds per trajectory\\ \midrule
Base (no adapter) & 0.333 & 98\% & 0.379 & 99\% & 19\\
Real LoRA & 0.399 & not recorded & 0.426 & not recorded & not recorded\\
Placebo LoRA & \textbf{0.175} & 86\% & \textbf{0.152} & 76\% & 52--65\\
\addlinespace
Placebo minus base & $-0.158$ & $-12$ pp & $-0.227$ & $-23$ pp & $+33$ to $+46$\\
\bottomrule
\end{tabular}}
\end{small}
\end{center}
\end{table}

%% file: tab_ablation_9b.tex
\begin{table}[!htbp]
\caption{Component ablation on Qwen3.5-9B (closing harness of the 9B line; seven cells, 4 rollouts each, two identical machines on one night, cross-machine offset $\leq 0.03$). ``Removes ledger'' also removes the exit gate and the checkpoint, which depend on it. Differences below 0.035 are within noise.}
\label{tab:ablation-9b}
\begin{center}
\begin{small}
\resizebox{\linewidth}{!}{\begin{tabular}{llcccccc}
\toprule
Cell & Removes & Dev score & Held-out score & Dev vs full & Held-out vs full & Dev delivery & Dev score among delivered\\ \midrule
Full & nothing & 0.366 & 0.438 & & & 99\% & 0.368\\
$-$ Checkpoint & mid-trajectory checkpoint & 0.385 & 0.440 & +0.019 & +0.002 & & \\
$-$ Gate & exit gate & 0.384 & 0.410 & +0.017 & $-0.028$ & & \\
$-$ Hard blocks & four hard blocks & 0.383 & 0.432 & +0.017 & $-0.005$ & & \\
$-$ Lessons & evolved lessons file & 0.317 & 0.362 & $-0.049$ & $-0.076$ & & \\
Bare & everything & 0.290 & 0.346 & $-0.076$ & $-0.092$ & 85\% & 0.340\\
$-$ Ledger & state ledger (with gate and checkpoint) & 0.278 & 0.292 & $-0.088$ & $-0.145$ & 98\% & 0.284\\
\bottomrule
\end{tabular}}
\end{small}
\end{center}
\end{table}

%% file: tab_ablation_4b.tex
\begin{table}[!htbp]
\caption{Component ablation on Qwen3.5-4B (closing harness of the 4B line; ten cells, one machine, one boot, 2 rollouts per cell except the exit-gate cell with 4). With two rollouts, differences below 0.05 are within noise. ``Seed kit'' keeps the seed harness and removes every product of evolution.}
\label{tab:ablation-4b}
\begin{center}
\begin{small}
\resizebox{\linewidth}{!}{\begin{tabular}{llcccc}
\toprule
Cell & Removes & Dev score & Held-out score & Dev vs full & Held-out vs full\\ \midrule
Full & nothing & 0.264 & 0.314 & & \\
$-$ Evolved middleware & citation auto-fix, submission nag & 0.310 & 0.313 & +0.046 & $-0.001$\\
$-$ Evolved rewrites & three policy and prompt rewrites & 0.302 & 0.287 & +0.038 & $-0.027$\\
$-$ Checkpoint & mid-trajectory checkpoint & 0.270 & 0.330 & +0.007 & +0.016\\
$-$ Gate (4 rollouts) & exit gate & 0.290 & 0.237 & +0.026 & $-0.077$\\
$-$ Ledger & state ledger & 0.253 & 0.254 & $-0.011$ & $-0.060$\\
$-$ Lessons & evolved lessons file & 0.197 & 0.194 & $-0.067$ & $-0.120$\\
$-$ Tools & three invented tools & 0.193 & 0.224 & $-0.071$ & $-0.090$\\
Seed kit & all products of evolution & 0.160 & 0.187 & $-0.104$ & $-0.127$\\
Bare & everything & 0.150 & 0.132 & $-0.114$ & $-0.183$\\
\bottomrule
\end{tabular}}
\end{small}
\end{center}
\end{table}

%% file: tab_webarena.tex
\begin{table}[!htbp]
\caption{WebArena-Lite with Qwen3.5-9B: seed harness versus the closing harness of the 13-round line, and the adapters trained on the closing harness's trajectories (4 rollouts per cell; the resolvable difference on this benchmark is 0.064). Delivery = the agent returned an answer; healthy, F3 and F4 = share of trajectories labelled as in Section~\ref{sec:taxonomy}; accuracy = judged-correct share among delivered answers. Base-model cells: one night, one site host; adapter cells: the following night on that host and a clone, with the closing-harness base anchor on the development tasks re-measured alongside them (0.375; 0.396 the night before).}
\label{tab:webarena}
\begin{center}
\begin{small}
\resizebox{\linewidth}{!}{\begin{tabular}{llccccccc}
\toprule
& & Dev score & \multicolumn{6}{c}{Test, 117 unseen tasks}\\
\cmidrule(lr){4-9}
Harness & Weights & (48 tasks) & Score & Delivery & Healthy & F3 & F4 & Accuracy among delivered\\ \midrule
Seed & Base & 0.271 & 0.256 & 49\% & 23.1\% & 45.9\% & 25.6\% & 0.47\\
Evolved & Base & 0.375 & \textbf{0.344} & 77\% & \textbf{33.8\%} & 40.2\% & 18.2\% & 0.44\\
\addlinespace
Seed & LoRA & 0.253 & 0.216 & 42\% & 20.3\% & 52.6\% & 19.4\% & 0.49\\
Seed & Full-parameter & 0.250 & 0.231 & 45\% & 20.7\% & 47.0\% & 23.9\% & 0.46\\
Evolved & LoRA & 0.349 & 0.327 & 81\% & 31.4\% & 38.7\% & 17.1\% & 0.39\\
Evolved & Full-parameter & 0.406 & 0.331 & \textbf{88\%} & 32.9\% & 35.7\% & 21.2\% & 0.37\\
\bottomrule
\end{tabular}}
\end{small}
\end{center}
\end{table}

%% file: tab_f_rates_other.tex
\begin{table}[!htbp]
\caption{Failure composition of the three smaller non-Qwen lines at their seed and closing anchors (48 development tasks, 4 rollouts; share of trajectories, labels as in Section~\ref{sec:taxonomy}; rows sum to 100\% up to rounding; computed after the fact from the anchor trajectories; the Ministral seed anchor logged 178 of 192 trajectories). Compare the Qwen seed rows of Table~\ref{tab:f-rates-lora}: healthy 15--34\%, F4 12--28\%. On all three the healthy share is at most 6\%, and on Spark and Ministral F4 is the largest class before any evolution.}
\label{tab:f-rates-other}
\begin{center}
\begin{small}
\resizebox{\linewidth}{!}{\begin{tabular}{llcccccccc}
\toprule
Model & Harness & Score & Delivery & Score among delivered & Healthy & F1 evidence & F2 harness friction & F3 execution control & F4 planning capacity\\ \midrule
MiniCPM5-2B & seed & 0.106 & 61\% & 0.173 & 5.7\% & \textbf{42.7\%} & 2.1\% & 32.3\% & 17.2\%\\
MiniCPM5-2B & evolved (5 rounds) & 0.110 & \textbf{71\%} & 0.156 & 4.7\% & 35.9\% & 2.6\% & 34.4\% & 22.4\%\\
\addlinespace
Spark-X2.5-4B & seed & 0.153 & 97\% & 0.158 & 6.2\% & 20.8\% & 2.1\% & 33.3\% & \textbf{37.5\%}\\
Spark-X2.5-4B & evolved (5 rounds) & 0.155 & 95\% & 0.164 & 5.2\% & 6.8\% & 0.0\% & 28.1\% & \textbf{59.9\%}\\
\addlinespace
Ministral-3-8B & seed & 0.112 & 90\% & 0.125 & 0.0\% & 28.1\% & 0.0\% & 7.3\% & \textbf{64.6\%}\\
Ministral-3-8B & evolved (5 rounds) & 0.127 & 93\% & 0.136 & 3.1\% & 30.7\% & 0.5\% & 5.2\% & \textbf{60.4\%}\\
\bottomrule
\end{tabular}}
\end{small}
\end{center}
\end{table}